\newenvironment{proof}{\paragraph{Proof:}}{\hfill$\square$}
\title{MiddleGAN: Generate Domain Agnostic Samples for Unsupervised Domain Adaptation}
\author{
    Ye Gao\textsuperscript{\rm 1},
    Zhendong Chu \textsuperscript{\rm 1},
    Hongning Wang \textsuperscript{\rm 1},
    John Stankovic \textsuperscript{\rm 1}
}
\begin{document}

\maketitle

\begin{abstract}
In recent years, machine learning has achieved impressive results across different application areas. However, machine learning algorithms do not necessarily perform well on a new domain with a different distribution than its training set. Domain Adaptation (DA) is used to mitigate this problem. One approach of existing DA algorithms is to find domain invariant features whose distributions in the source domain are the same as their distribution in the target domain. In this paper, we propose to let the classifier that performs the final classification task on the target domain learn implicitly the invariant features to perform classification. It is achieved via feeding the classifier during training generated fake samples that are similar to samples from both the source and target domains. We call these generated samples domain agnostic samples. To accomplish this we propose a novel variation of generative adversarial networks (GAN), called the MiddleGAN, that generates fake samples that are similar to samples from both the source and target domains, using two discriminators and one generator. We extend the theory of GAN to show that there exist optimal solutions for the parameters of the two discriminators and one generator in MiddleGAN, and empirically show that the samples generated by the MiddleGAN are similar to both samples from the source domain and samples from the target domain. We conducted extensive evaluations using 24 benchmarks; on the 24 benchmarks, we compare MiddleGAN against various state-of-the-art algorithms and outperform the state-of-the-art by up to 20.1\% on certain benchmarks.
\end{abstract}

\section{Introduction} 
In recent years, deep learning has achieved impressive results across different application domains \cite{esteva2021deep, he2016deep, szegedy2017inception, zhu2017densenet, purwins2019deep, noda2015audio, wu2021deep, wahab2021dna}. However, a deep neural net does not necessarily perform well on a new domain with different distribution than its training set. This problem is called domain shift, and domain adaptation (DA) have been invented to tackle the issue of domain shift. One approach of DA is to find domain-invariant features \cite{zhao2019learning}. 

Instead of explicitly selecting domain-invariant features, we propose to let a classifier that will perform the classification task on the target domain implicitly learn to use domain-invariant features (to perform classification). In this way, we do not have to hand-engineer the features which may not be inclusive enough to include all the features that are domain-invariant. Our intuition is based on the observation that deep neural networks such as the ResNet-50 \cite{he2016deep} or Inception \cite{szegedy2016rethinking} generalize well when trained on a large amount of data. If we want the classifier to learn the domain invariant features (implicitly), we need a large amount of samples that is similar to both the source domain samples and the target domain samples. We call those samples domain agnostic samples. If we train a neural network such as the ResNet-50 with a large quantity of domain agnostic samples, the neural network will implicitly learn to use the domain-invariant features to perform classification.

How do we generate those domain agnostic samples? We propose a variation of GAN, called the MiddleGAN, which has two discriminators and a generator. One discriminator is for the source domain; it tries to distinguish a generated sample from real samples from the source domain. Another discriminator is for the target domain; it tries to distinguish a generated sample from the real samples from the target domain. The generator is trying to generate samples that can deceive both discriminators at the same time. The three neural networks engage in a minimax game in which the generator is trying to generate samples to confuse both the source discriminator and the target discriminator. Ideally, after training, the generated samples will be indistinguishable from both the real source domain samples and the real target domain samples, thus achieving the similarity to samples of both domains. We have also extended the theory of GAN to theoretically prove that there exist optimal values for the source and target discriminators and the generator.

\textbf{The contributions of this paper are}:
\begin{itemize}
    \item We create a novel variation of GAN, the MiddleGAN, that generates samples that are similar to samples from both the source and target domains. In other words, these generated samples are domain invariant.
    \item We conduct extensive evaluation on 24 benchmarks; on the 24 benchmarks, we compare MiddleGAN against various state-of-the-art algorithms and outperform the state-of-the-art by up to 20.1\% on certain benchmarks.
\end{itemize}
\section{Related Works}
In this Section we present two important areas of research related to our work: the Generative Adversarial Nets (GANs), and recent advances in domain adaptation.

\subsection{Generative Adversarial Nets}
Goodfellow et al. \cite{goodfellow2014generative} propose the original GAN which consists of two neural nets: the discriminator and the generator. The two nets engage in a minimax game where the generator attempts to generate images from noise to fool the discriminator, while the discriminator attempts to distinguish generated images from real ones. 
Inspired by this work, works such as the CGAN \cite{mirza2014conditional} and ACGAN \cite{odena2017conditional} attempt to regulate the classes of the generated images. The CGAN \cite{mirza2014conditional} is constructed in the way that, during training, the label information is fed to both the generator and the discriminator. The discriminator of the ACGAN \cite{odena2017conditional} has two objective functions: to maximize the log likelihood that a given sample is of the correct source (generated or real), and to maximize the log likelihood that the label (which class is this sample from) of the sample is correct. Moving past the GANs that leverage label information, another set of GANs focus on cycle consistency of the generated images. For example, the CycleGAN \cite{zhu2017unpaired} employs two generators, one to translate an image from the source to the target and the other to translate back a translated image by the first generator. A cycle consistency loss is added to minimize the discrepancy between an original, unaltered image, and the image translated by the first generator and then translated back by the second generator. StarGAN \cite{choi2018stargan} addresses the scalability issue that different GAN models need to be created for all pairs of domains. Unlike the previous works \cite{mirza2014conditional, odena2017conditional} which use only one generator, the MiddleGAN employs two discriminators and one generator and aims to generate samples that are similar to both the source domain samples and the target domain samples. MiddleGAN is a GAN designed specifically for Domain Adaptation while the other previous works on GAN mentioned in this section seek to generate realistic samples or achieve style transfer.

\subsection{Domain Adaptation}
One trend in the area of domain adaptation is to find domain-invariant features. DANN \cite{ganin2016domain} proposes a new neural network architecture based on the traditional feed-forward architecture by adding extra layers and a gradient reversal layer. Through adversarial training, it is able to find domain-invariant features and correctly classify source and target samples based on these features. Another way to find domain invariant features is to reduce the Maximum Mean Discrepancy (MMD) \cite{gretton2012kernel} between the feature representations of the source samples and the feature representations of the target samples. Tzeng et al. \cite{tzeng2014deep} and Long et al. \cite{long2015learning} both reduce the MMD while training a model to perform well on the source domain. A variation of this trend is to find the association of the source features and target features so that the domain discrepancy on the learned features is reduced, such as Haeusser et al \cite{haeusser2017associative}. Another variation of this trend is to use a generator/encoder to encode target samples so they are projected to the source feature space and, therefore, can be classified by the source model \cite{tzeng2017adversarial}. Other works that focus on finding domain-invariant features that are discriminative to the classification task are: \cite{gopalan2011domain, sun2016deep, long2017deep, ganin2015unsupervised}. However, the aforementioned approaches fail to address the scenario in which the discrepancy of the source and target is very large because domain-invariant features will be harder to find. As a result, another new trend is to find the middle domains of the source and target domains. Fixbi \cite{na2021fixbi} proposes to establish one source-dominant domain and one-target dominant domain as two intermediate domains to help bridge the gap between the source and target domain. Unlike the MiddleGAN, Fixbi is not a generative model so it does not generate domain invariant samples using the two intermediate domains. Instead, it uses the two intermediate domains to train a source-dominant model and a target-dominant model via bi-directional feature matching. The bi-directional feature matching guarantees that the parameters of source-dominant model and target-dominant model converge. Other (unsupervised) domain adaptation algorithm include RSDA \cite{gu2020spherical}, which proposes an adversarial domain adaptation scheme in the spherical feature space, SRDC \cite{tang2020unsupervised}, which proposes to achieve unsupervised domain adaptation via structurally regularized deep clustering, CAN \cite{kang2019contrastive} proposes to explicitly model intra-class and inter-class domain discrepancy. SEMA \cite{zuo2021margin} attempts to address the issue that most domain adaptation algorithms ignore the discriminative features among classes. The Enforced Transfer \cite{gao2022enforced} is based on the idea that some target samples closer to the distribution of the source domain should be directly processed by the source classifier, instead of training a target classifier to process them.


\section{MiddleGAN}

Before we discuss our MiddleGAN, we need to discuss the original GAN on which MiddleGAN is based. In the original GAN \cite{goodfellow2014generative}, the generator $G$ and the discriminator $D$ engage in a minimax game in which $G$ tries to minimize a value objective $V(G,D)$ whereas $D$ tries to maximize it. $V(G,D)$ is defined in Equation \ref{eq:original_GAN}, in which $p$ is the distribution of the real samples and $q$ is the distribution of the noise. A key observation obtained from Equation \ref{eq:original_GAN} is that $G$'s effort is to generate $G({z})$ whereas ${z}$ is an input noise such that $G({z})$ will be in-distribution with the distribution of the real samples $p$.

\begin{equation} 
\label{eq:original_GAN}
\begin{split}
\underset{G}{\min} \; \underset{D}{\max} \; V(G,D) 
&= \mathds{E}_{{x} \sim p({x})} [\log(D({x})) \\
&+ \mathds{E}_{{z} \sim q({z})} [\log(1-D(G(z)))]
\end{split}
\end{equation}


Based on the key observation that we obtain from Equation \ref{eq:original_GAN}, in MiddleGAN we propose to employ two discriminators, $D_s$ and $D_t$. $D_s$ tries to distinguish a generated sample from real source domain samples, and $D_t$ tries to distinguish a generated sample from real target domain samples. The generator $G$ engages in a two-way minimax game with the two discriminators. The samples it generates will be in the middle of the feature space of the source and the target domains. Below, we empirically prove that the generated samples in $p_m$ are represented by the features that are invariant across the source and target domains.

Formally, the objective function of $D_t$, $D_s$, and $G$ is described by Equation \ref{eq:middle_GAN}. 

\begin{equation} 
\label{eq:middle_GAN}
\begin{aligned}
& \underset{G}{\min} \; \underset{D_s, \: D_t}{\max} \; V(G,D_s, D_t) \\
&= \mathds{E}_{{x}_s \sim p_s({x}_s)} [\log(D({x}_s))] 
+  \mathds{E}_{{z} \sim q({z})} [\log(1-D_s(G(z)))] \\
&+\mathds{E}_{{x}_t \sim p_t({x}_t)} [\log(D({x}_t))]+ \mathds{E}_{{z} \sim q({z})} [\log(1-D_t(G(z)))]
\end{aligned}
\end{equation}

\begin{figure}
\centering
\includegraphics[width=0.5\textwidth]{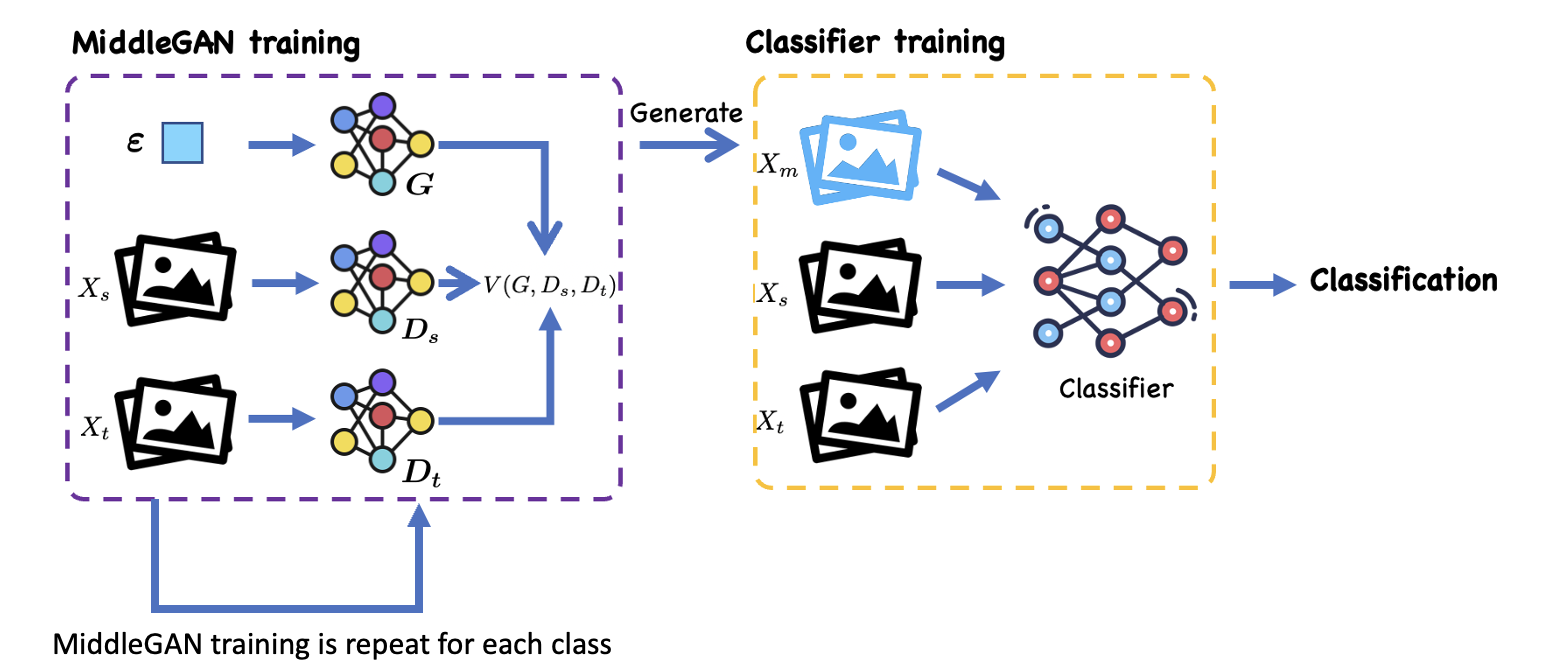}
\caption{In this Figure we describe how to use the MiddleGAN to generate fake, domain agnostic samples and how to use those domain agnostic samples to train the classifier that eventually performs the classification task on the target domain. }
\label{fig:middlegan_flowchart}
\end{figure}

In the previous paragraphs we have described how to generate samples that are similar to both the source samples and the target samples. Recall that we have argued that we will feed these samples during training to the classifier that performs the final classification task on the target domain (in a supervised fashion). In this case, how do we obtain the labels of the generated samples? The labels of the generated data is the same as the labels of the source and target samples that are used to generate them. In other words, only source samples of a particular class and target samples of that particular class get to be used to generate fake samples of this class. We repeat the generation process for all classes in the source and target domains to generate fake samples. 

Note that in the setting of unsupervised domain adaptation, the labels of the target domain are not available. How do we obtain those labels to train the generator and classifier? We propose to use an existing unsupervised domain adaptation algorithm, Fixbi, to obtain psedo-labels for the target domains. Then, we use the psedo-labels of the target domain, together with the labels of the source domain, to train the generator and the classifier.


Figure \ref{fig:middlegan_flowchart} shows the flowchart of how to use the generated domain agnostic samples to train the classifier that eventually performs classification on the target domain. In Figure \ref{fig:middlegan_flowchart}, $\epsilon$ is noise, $X_s$ and $X_t$ are the source samples and target samples respectively, and $X_m$ is the generated samples by $G$. During the training of MiddleGAN, the three neural networks $G$, $D_s$, and $D_t$ engaged in a minimax game. The process of MiddleGAN training (in the purple box) is repeated for all classes in the source domain (and the target domain). During the training of the classifier, both $X_s$, $X_t$, and $X_m$ and their labels are used.

\subsection{Theoretical Results}
We first discuss the two discriminators $D_s$ and $D_t$ given a fixed $G$. We propose Theorem \ref{th:optimal_ds_dt} regarding the optimal values for $D_s$ and $D_t$, represented as $D^{*}_s$ and $D^{*}_t$

\begin{restatable}[]{theorem}{dsdt}
\label{th:optimal_ds_dt}
Given $p_m$, the distribution of samples generated by a fixed generator $G$, the optimal values for the parameters of $D_s$ and $D_t$ are $ D^{*}_s = \frac{p_s}{p_s + p_m} $ and $ D^{*}_t = \frac{p_t}{p_t + p_m} $.
\end{restatable}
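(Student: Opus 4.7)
The plan is to mimic the proof of the optimal-discriminator proposition from the original GAN paper, using the fact that in $V(G,D_s,D_t)$ the two discriminators appear in separate, additive groups of terms, so that for fixed $G$ they can be optimized independently of each other.

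First I would split $V(G,D_s,D_t)$ into two pieces: the pair of terms involving $D_s$, namely $\mathds{E}_{x_s \sim p_s}[\log D_s(x_s)] + \mathds{E}_{z \sim q}[\log(1 - D_s(G(z)))]$, and the analogous pair involving $D_t$. Since each discriminator appears only in its own group, maximizing $V$ over $(D_s, D_t)$ reduces to maximizing each group separately. This reduction is the one conceptual step that needs to be pointed out explicitly; after that the problem collapses to the standard single-discriminator case.

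Next I would convert the noise-side expectation into an integral over the sample space using the pushforward identity $\mathds{E}_{z \sim q}[\log(1 - D_s(G(z)))] = \int p_m(x)\log(1 - D_s(x))\,dx$, where $p_m$ is the distribution induced by $G$ on the sample space. Combining this with $\mathds{E}_{x_s \sim p_s}[\log D_s(x_s)] = \int p_s(x)\log D_s(x)\,dx$, the $D_s$-objective becomes
\begin{equation*}
\int \bigl[p_s(x)\log D_s(x) + p_m(x)\log(1 - D_s(x))\bigr]\,dx.
\end{equation*}
Pointwise maximization of the integrand in the form $a\log y + b\log(1-y)$ (with $a, b \ge 0$ not both zero) yields the unique maximizer $y = a/(a+b)$, giving $D^{*}_s(x) = p_s(x)/(p_s(x) + p_m(x))$ on the support of $p_s + p_m$.

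The identical argument applied to the $D_t$-group yields $D^{*}_t(x) = p_t(x)/(p_t(x) + p_m(x))$. The only mild subtlety, and the thing I would be careful to state, is that outside the support of $p_s + p_m$ (respectively $p_t + p_m$) the discriminator value is unconstrained by the objective, so the optimum is defined only up to a measure-zero set; this is the same caveat as in Goodfellow et al.\ and is not a genuine obstacle. I do not anticipate any hard step beyond making the decomposition and the $z \to x$ change of variables explicit.
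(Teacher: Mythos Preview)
Your proposal is correct and follows essentially the same argument as the paper: expand $V(G,D_s,D_t)$ into integrals, use the pushforward $z\mapsto G(z)$ to replace the noise expectation by an expectation under $p_m$, and then apply the pointwise maximizer of $a\log y + b\log(1-y)$ at $y=a/(a+b)$ separately for $D_s$ and $D_t$. The paper's version is terser and cites Goodfellow et al.\ for the last step; your explicit remark that the additive separation in $(D_s,D_t)$ permits independent optimization, and your support caveat, are nice clarifications but not a different route.
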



\begin{proof}
The value objective $V(G, D_s, D_t)$ can be expanded.

\begin{equation}
\begin{aligned}
V(G, D_s, D_t) & = \int_{x_s} p_s(x_s) \log(D_s(x_s)) dx_s \\
&+ \int_{x_t} p_t(x_t) \log(D_t(x_t)) dx_t \\
&+ \int_z q(z) \log(1-D_s(G(z)))dz \\
&+ \int_z q(z) \log(1-D_t(G(z)))dz \\
&= \int_{x_s} p_s(x_s) \log(D_s(x_s)) dx_s \\
&+ p_m(x_s)\log(1-D_s(x_s))dx_s \\
&+ \int_{x_t} p_t(x_t) \log(D_t(x_t)) \\
&+ p_m(x_t)\log(1-D_s(x_t))dx_t \\
\end{aligned}
\end{equation}

We observe that $p_s$, $p_t$ and $p_m$ belong in $\mathds{R}$. For the source discriminator $D_s$, any pair of $p_s$ and $p_m$ in the form of $p_s \: \log(y) + p_m \: (1- \log(y))$, $p_s \: \log(y) + p_m \: (1- \log(y))$ achieves its maximum value at $\frac{p_s}{p_s + p_m}$ \cite{goodfellow2014generative}. Similarly, for the target discriminator $D_t$, any pair of $p_t$ and $p_m$ in the form of $p_t \: \log(y) + p_m \: (1- \log(y))$, $p_t \: \log(y) + p_m \: (1- \log(y))$ achieves its maximum value at $\frac{p_t}{p_t + p_m}$.
\end{proof}

Now we bring forth Theorem \ref{th:g} which proposes that there exists an optimal solution for the parameters of not only $D_s$ and $D_t$, but also $G$.
\begin{restatable}[]{theorem}{globalminimum}
\label{th:g}
There exists a global minimum for the virtual training criterion $C(G)$ defined as 
\begin{equation}
    C(G) = \underset{D_s, \: D_t}{max} V(G, D_s, D_t).
\end{equation}
In other words, there exists an optimal solution for the parameters of the generator $G$.
\end{restatable}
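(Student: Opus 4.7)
The plan is to mimic the single-discriminator argument of Goodfellow et al., but carry two Jensen--Shannon terms through instead of one. First I would substitute the optimal discriminators $D_s^{*} = p_s/(p_s+p_m)$ and $D_t^{*} = p_t/(p_t+p_m)$ from Theorem~\ref{th:optimal_ds_dt} back into $V(G,D_s,D_t)$, while also converting the two noise-expectation terms $\mathds{E}_{z\sim q}[\,\cdot\,]$ into expectations over $x\sim p_m$, where $p_m$ denotes the pushforward of $q$ through $G$. This yields
\begin{equation*}
C(G) = \mathds{E}_{x\sim p_s}\!\left[\log\tfrac{p_s}{p_s+p_m}\right] + \mathds{E}_{x\sim p_m}\!\left[\log\tfrac{p_m}{p_s+p_m}\right] + \mathds{E}_{x\sim p_t}\!\left[\log\tfrac{p_t}{p_t+p_m}\right] + \mathds{E}_{x\sim p_m}\!\left[\log\tfrac{p_m}{p_t+p_m}\right].
\end{equation*}

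Next I would recognize each of the two pairs of terms as the standard GAN expression and rewrite them using the Jensen--Shannon divergence. Adding and subtracting $\log 2$ inside each expectation, one obtains
\begin{equation*}
C(G) = -2\log 4 \;+\; 2\,\mathrm{JSD}(p_s \,\|\, p_m) \;+\; 2\,\mathrm{JSD}(p_t \,\|\, p_m).
\end{equation*}
From here, existence of a global minimum of $C(G)$ reduces to existence of a minimizer of the functional $F(p_m) := \mathrm{JSD}(p_s \,\|\, p_m) + \mathrm{JSD}(p_t \,\|\, p_m)$ over the convex set of probability densities. I would invoke three standard facts: (i) $\mathrm{JSD}$ is non-negative, so $F$ is bounded below by $0$ and $C(G)$ is bounded below by $-2\log 4$; (ii) $\mathrm{JSD}(p\,\|\,q)$ is jointly convex in $(p,q)$, hence $F$ is convex in $p_m$; (iii) $F$ is lower semicontinuous on the (weakly compact) simplex of densities. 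A convex, lower semicontinuous, coercive functional on a weakly compact convex set attains its minimum, giving an optimal $p_m^{*}$, and correspondingly an optimal $G$ whose pushforward realizes $p_m^{*}$.

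The main obstacle, and the point where this proof genuinely differs from the one-discriminator case, is that the lower bound $-2\log 4$ is \emph{not} attained unless $p_s = p_t$. In Goodfellow et al.\ the minimum is obviously attained at $p_g = p_{\text{data}}$; here no single $p_m$ can simultaneously equal both $p_s$ and $p_t$, so one must argue existence without exhibiting a closed-form minimizer. I would handle this by appealing to the convexity/compactness argument above rather than trying to write down $p_m^{*}$ explicitly (it is a Jensen--Shannon centroid of $p_s$ and $p_t$, which has no elementary form in general). The final step is to note that the theorem only claims existence, so the argument stops once convexity plus weak compactness delivers a minimizer; an explicit characterization of $p_m^{*}$ as a ``middle'' distribution between $p_s$ and $p_t$ is a pleasant by-product that also justifies the name MiddleGAN.
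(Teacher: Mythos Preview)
Your proposal is correct and follows essentially the same route as the paper: substitute the optimal discriminators, reduce $C(G)$ to $-2\log 4 + 2\,\mathrm{JSD}(p_s\|p_m) + 2\,\mathrm{JSD}(p_t\|p_m)$, and identify the minimization as a Jensen--Shannon centroid problem whose convexity is then invoked (the paper cites \citep{nielsen2020generalization} for this last step, whereas you argue it directly from joint convexity of $\mathrm{JSD}$). If anything, you are slightly more careful than the paper in flagging that the naive lower bound $-2\log 4$ is unattainable when $p_s\neq p_t$ and in supplementing convexity with a compactness/l.s.c.\ argument to actually secure existence of a minimizer.
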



\begin{proof}
Goodfellow et al. \cite{goodfellow2014generative} have proved that, in the original GAN where there is only one discriminator $D$ and one generator $G$, the virtual training criterion can be written as the following:

\begin{equation}
\centering
\begin{aligned}
C_{original}(G) &= \underset{D}{max} \; V(G, D) \\
&= -log(4) + KL(p\parallel \frac{p+p_m}{2}) + KL(p_m\parallel \frac{p+p_m}{2})
\end{aligned}
\end{equation}
in which $p$ is the distribution of the real samples and $p_m$ is the distribution of generated fake samples, and KL is the Kullback–Leibler divergence. With two discriminators, our virtual training criterion $C(G)$ can be rewritten as:

\begin{equation}
\label{eq:C_G_JSD}
    \begin{aligned}
    C(G) =& -log(4) + KL(p_s\parallel \frac{p_s+p_m}{2}) + KL(p_m\parallel \frac{p_s+p_m}{2}) \\
    & -log(4) + KL(p_t\parallel \frac{p_t+p_m}{2}) + KL(p_m\parallel \frac{p_t+p_m}{2}) \\
    =& -2log(4) + 2JSD(p_s \parallel p_m) + 2JSD(p_t \parallel p_m)
    \end{aligned}
\end{equation}

In Equation \ref{eq:C_G_JSD}, JSD is the  Jensen–Shannon divergence. To find the global minimum, $M(G)$, we want to obtain

\begin{equation}
\label{eq:C_G_JSD_centroid}
    \begin{aligned}
    M(G) &= \underset{p_m}{argmin} -2log(4) + 2JSD(p_s \parallel p_m) + 2JSD(p_t \parallel p_m)
    \end{aligned}
\end{equation}
We observe in Equation \ref{eq:C_G_JSD_centroid} that we are looking for the optimal value  of the JSD centroid defined as $Centroid^{*} = arg \;\underset{Q}{min} \sum_{i=1}^{n} JSD (P_i \parallel Q)$ in which $P_i$ and $Q$ are distributions.
We can see that the generator is essentially looking for the JSD controid of the source domain distribution $p_s$ and the target domain distribution $p_t$. The convexity of the problem has been proved in \citep{nielsen2020generalization}. 
\end{proof}

\subsection{Guaranteed Domain Agnosticism of Generated Samples}
\label{sec:feat_invariance}
Are the samples generated by the MiddleGAN similar to both the source and the target domains? In this section we use two examples to show that fake samples in the distribution $p_m$ are similar to both the samples in the source and target distributions $p_s$ and $p_t$. 

To test if the generated samples are indeed similar to both the source and target samples (domain agnostic), we propose a simple, but effective way to attest it. We treat the original, unaltered MNIST \cite{deng2012mnist} as the source domain. MNIST is a dataset that contains pictures of 10 classes of handwritten digits. For the target domain, we alter the MNIST dataset by rotating each sample 180 degrees. Then, we use the MiddleGAN to generate the intermediate samples. After obtaining the fake samples, we perform the first round of an experiment by training a classifier (Inception v3 with the last layer replaced to have 10 neurons to correspond to ten classes of handwritten digits) on the combination of the training sets of both the source and the target domains as well as the fake samples. We achieve an accuracy of 99.4\% on the source domain's testing set and an accuracy of 99.4\% on the target domain's testing set (Accuracy is calculated in terms of whether the classifier correctly classifies a sample that is of one of the ten classes of handwritten digits). We use a learning rate of 0.0002 and the Adam optimizer and train 5 epochs. Note that the difference between the source and target domains in the first round of the experiments is only caused by the direction of the MNIST samples. To demonstrate if the fake samples are robust to the difference (i.e. domain agnostic), we rotate those fake samples by 180 degree as well. Then, we train a classifier with the same structure using the same hyperparameters including the learning rate, the optimizer, and the training epochs. Then, we train the classifier on the combination of the training sets of both the source and the target domains as well as the \textbf{rotated} fake samples. We have achieved an accuracy of 99.4\% on the source testing set and 99.3\% on the target testing set.

\begin{table}[h]
    \centering
    
    \begin{tabular}{ ccc } 
    \hline
                & Source Acc.     & Target Acc.\\
    \hline

    Upright fake samples     & 99.4\%    & 99.4\%  \\ 
    Rotated fake samples     & 98.9\%    & 99.3\% \\

    \hline
    \\
    \end{tabular}
    \caption{The results from the two rounds of experiments. There is no significant change to the performance measured in accuracy on both the source and target's testing sets.}

    \label{table:fake_robust}
\end{table}

From Table \ref{table:fake_robust} we conclude that there is no significant change to the performance of the classifier, despite that one's training samples contain only upright fake samples and the other's training samples contain only rotated fake samples. \textbf{This indicates that the fake samples are domain agnostic because whether or not we rotate them it makes no difference.}

We have included another example to demonstrate that the fake samples generated by the MiddleGAN are similar to both the source and the target domains. Figure \ref{fig:women_men_middle} contains three subfigures. The first subfigure (from CelebA Dataset \cite{liu2015faceattributes}) contains 64 real samples of cis gender women (the source domain). The second subfigure (from CelebA Dataset \cite{liu2015faceattributes}) contains 64 samples of cis gender men (the target domain). The third subfigure contains 64 samples of fake samples (generated by the MiddleGAN) that visually have both the characteristics of femininity and masculinity. Therefore, it is attested visually that the fake samples generated by MiddleGAN has the similarity of both the source and the target domains. Upon the inspection by two human inspectors, both agree that the generated samples have both feminine and masculine features. In other words, the generated samples are similar to both the source and target domains.

\begin{figure*}
\centering
\includegraphics[width=1\textwidth]{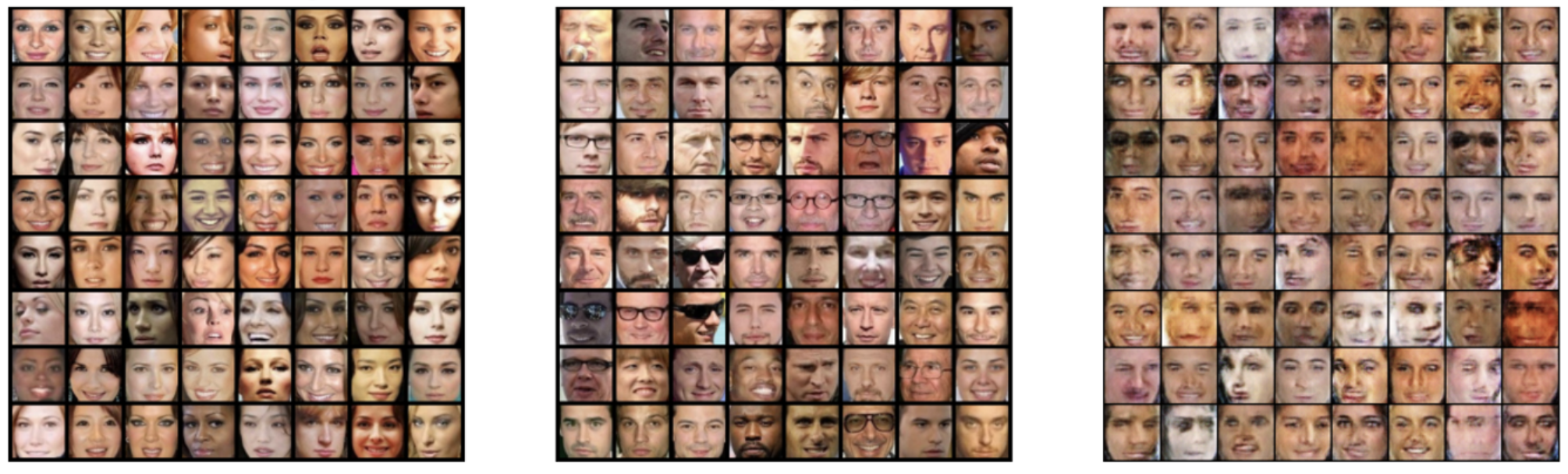}
\caption{The left figure is from the source, real samples of cis gender women. The middle figure is from the target, real samples of cis gender men. The right samples are fake samples generated by MiddleGAN. As we can observe, the fake samples contain both feminine and masculine facial features.}
\label{fig:women_men_middle}
\end{figure*}
\section{Evaluation}

\begin{table*}[h]
    \centering
    \begin{tabular}{ ccccccc|c } 
    
    \hline
    Algorithm   &  A$\rightarrow$W &  A$\rightarrow$D &  D$\rightarrow$W &  D$\rightarrow$A &  W$\rightarrow$A &  W$\rightarrow$D  & Avg    \\
    \hline
    ResNet-50 \cite{he2016deep}    &   68.4\%&            68.9\%&             96.7\%&         62.5\%&         60.7\%&   99.3\%&           76.1\%\\
    DANN \cite{ganin2015unsupervised}         &   82.0\%&            79.7\%&             96.9\%&         68.2\%&         67.4\%&   99.1\%&           82.2\%\\
    MSTN \cite{xie2018learning}        &   91.3\%&            90.4\%&             98.9\%&         72.7\%&         65.6\%&   \textbf{100\%}&   86.5\%\\
    CDAN+E \cite{long2018conditional}      &   94.1\%&            92.9\%&             98.6\%&         71.0\%&         69.3\%&   \textbf{100\%}&   87.7\%\\
    DMRL \cite{wu2020dual}         &   90.8\%&            93.4\%&             99.0\%&         73.0\%&         71.2\%&   \textbf{100\%}&   87.9\%\\
    SymNets \cite{zhang2019domain}      &   90.8\%&            93.9\%&             98.8\%&         74.6\%&         72.5\%&   \textbf{100\%}&   88.4\%\\
    GSDA \cite{hu2020unsupervised}         &   95.7\%&            94.8\%&             99.1\%&         73.5\%&         74.9\%&   \textbf{100\%}&   89.7\%\\
    CAN \cite{kang2019contrastive}         &   94.5\%&            95.0\%&             99.1\%&         78.0\%&         77.0\%&   99.8\%&           90.6\%\\
    SRDC \cite{tang2020unsupervised}        &   95.7\%&            \textbf{95.8\%}&    99.2\%&         76.7\%&         77.1\%&   \textbf{100\%}&   90.8\%\\
    RSDA-MSTN \cite{gu2020spherical}    &   \textbf{96.1\%}&   \textbf{95.8\%}&    99.3\%&         77.4\%&         78.9\%&   \textbf{100\%}&   91.1\%\\
    FixBi \cite{na2021fixbi}       &   \textbf{96.1\%}&   95.0\%&             99.3\%&         78.7\%&         79.4\%&   \textbf{100\%}&   91.4\%\\
    \hline
    MiddleGAN    &   92.4\%&            94.1\%&             \textbf{100\%}&   \textbf{84.9\%}&   \textbf{83.5\%}&   \textbf{100\%}&   \textbf{92.4\%}\\

    \hline
    \\
    \end{tabular}
    \caption{The results on the domain adaptation tasks among the three domains in the dataset Office-31. The metric is accuracy.}

    \label{table:eval_office_31}
\end{table*}

\begin{table*}[h]
    \centering
    \scalebox{0.65}{
    \begin{tabular}{ ccccccccccccc|c } 
    \hline
    Algorithm   & Pr$\rightarrow$Ar & Ar$\rightarrow$Pr & Cl$\rightarrow$Ar & Ar$\rightarrow$Cl & Rw$\rightarrow$Ar & Ar$\rightarrow$Rw & Pr$\rightarrow$Cl & Cl$\rightarrow$Pr& Rw $\rightarrow$ Pr & Pr$\rightarrow$Rw & Rw$\rightarrow$Cl & Cl$\rightarrow$Rw  & Avg    \\
    \hline
    ResNet-50 \cite{he2016deep} & 38.5\% & 50\% & 37.4\% &  34.9\% &  53.9\% &  58\% &  31.2\% &  41.9\% & 59.9\% &  60.4\% &  41.2\% &  46.2\% & 46.1\% \\
    
    DANN \cite{ganin2015unsupervised}        &  41.6\% &  59.3\% &  47.0\% &  45.6\% &  63.2\% &  70.1\% &  43.7\% &  58.5\% &  76.8\% &  68.5\% &  51.8\% &  60.9\% & 57.6\%\\
    CDAN \cite{long2018conditional}       &  55.6\% &  69.3\% &  54.4\% &  49.0\% &  68.4\% &  74.5\% &  48.3\% &  66.0\% &  80.5\% &  75.9\% &  55.4\% &  68.4\% & 63.8\%\\
    MSTN \cite{xie2018learning}       &  61.4\% &  70.3\% &  60.4\% &  49.8\% &  70.9\% &  76.3\% &  48.9\% &  68.5\% &  81.1\% &  75.7\% &  55.0\% &  69.6\% & 65.7\%\\
    SymNets \cite{zhang2019domain}     &  63.6\% &  72.9\% &  64.2\% &  47.7\% &  73.8\% &  78.5\% &  47.6\% &  71.3\% &  82.6\% &  79.4\% &  50.8\% &  74.2\% & 67.2\% \\
    GSDA \cite{hu2020unsupervised}         &  65.0\% &  76.1\% &  65.4\% &  61.3\% &  72.2\% &  79.4\% &  53.2\% &  73.3\% &  83.1\% &  80.0\% &  60.6\% &  74.3\% & 70.3\%\\
    GVB-GD \cite{cui2020gradually}      &  65.2\% &  74.7\% &  64.6\% &  57.0\% &  74.6\% &  79.8\% &  55.1\% &  74.1\% &  84.3\% &  81.0\% &  59.7\% &  74.6\% & 70.4\%\\
    RSDA-MSTN \cite{gu2020spherical}   &  67.9\% &  77.7\% &  66.4\% &  53.2\% &  75.8\% &  \textbf{81.3\%} &  53.0\% &  74.0\% &  85.4\% &  \textbf{82.0\%} &  57.8\% &  76.5\% & 70.9\%\\
    SRDC \cite{tang2020unsupervised}        &  \textbf{68.7\%} &  76.3\% &  \textbf{69.5\%} &  52.3\% &  76.3\% &  81.0\% &  53.8\% &  76.2\% &  85.0\% &  81.7\% &  57.1\% &  78.0\% & 71.3\%\\
    Fixbi \cite{na2021fixbi}      &  65.8\% &  77.3\% &  67.7\% &  58.1\% &  \textbf{76.4\%} &  80.4\% &  57.9\% &  79.5\% &  \textbf{86.7\%} &  81.7\% &  62.9\% &  \textbf{78.1\%} & 72.7\%\\

    \hline
    MiddleGAN   & 65.0\% & \textbf{86.9\%} & 63.3\% & \textbf{78.2\%} & 66.2\% & 76.8\% & \textbf{73.8\%} & \textbf{86.4\%} & 86.1\% & 71.5\% & \textbf{75.2\%} & 73.7\% & \textbf{75.3\%} \\
    \hline
    \\
    \end{tabular}
    }
    \caption{The results on the domain adaptation tasks among the four domains in the dataset Office-Home. The metric is accuracy.}

    \label{table:eval_office_home}
\end{table*}

\begin{table*}[h]
    \centering
    \begin{tabular}{ ccc } 
    \hline
    Algorithm   & SVHN $\rightarrow$ MNIST &  MNIST $\rightarrow$ SVHN     \\
    \hline

    Source Only \cite{french2017self}       &  66.5\%    & 25.4\%  \\ 
    Reverse Grad \cite{bousmalis2017unsupervised}         & 73.9\%     &35.6\% \\
    DCRN \cite{bousmalis2016domain}                 & 81.9\%     & 40.0\% \\
    ADDA \cite{tzeng2017adversarial}                &76.0\%     & - \\
    ATT \cite{ganin2015unsupervised}       & 86.2\%           & 52.8\% \\
    SBADA-GAN \cite{ghifary2016deep}           &76.1\%     & 61.0\% \\
    Mean Teacher \cite{french2017self}        &99.2\%     & 97.0\% \\
    \hline
    MiddleGAN           &\textbf{99.5\%}     & \textbf{99.9\%} \\

    \hline
    \\
    \end{tabular}
    \caption{The results on the domain adaptation task of SVHN $\rightarrow$ MNIST and  MNIST $\rightarrow$ SVHN.}

    \label{table:eval_mnist_svhn}
\end{table*}

\label{sec:evaluation}
In this section, we evaluate the MiddleGAN on the following tasks: CIFAR-10 $\leftrightarrow$ STL-10 (two tasks), MNIST $\leftrightarrow$ USPS (two tasks), MNIST $\leftrightarrow$ SVHH (two tasks), and on two domain adaptation benchmarks Office-31 and Office-Home which contain three domains and four domains, respectively. Therefore, there are 6 domain adaptation tasks derived from Office-31 and 12 domain adaptation tasks derived from Office-Home. On all 24 tasks that we evaluate, MiddleGAN outperforms the state-of-the-art by up to 20.1\% on certain benchmarks.

\subsection{Setups}

\subsubsection{Datasets}
The following datasets are used to evaluate the MiddleGAN.

\textbf{CIFAR-10} \cite{cifar10} contains 10 classes of images that are 32 $\times$ 32 pixels in size. It is a fairly large dataset; each of its classes has 6000 images. Its training set contains 50,000 images and its testing set contains 10,000 images. Accuracy on its testing set is calculated in terms of whether the classifier correctly classifies a sample that is of one of the ten classes of images.

\textbf{STL-10} \cite{coates2011analysis} contains 10 classes of images that are 96 $\times$ 96 pixels in size. It is different from CIFAR-10 by one class. For each of its classes, there are 500 training samples, and 800 testing samples. Accuracy on its testing set is calculated in terms of if the classifier correctly classifies a sample that is one of the ten classes of images. 

\textbf{MNIST} \cite{lecun1998gradient} contains 10 classes of handwritten digits. They are 28 $\times$ 28 pixels in size. There are 60,000 training samples and 10,000 testing samples. Accuracy on its testing set is calculated in terms of if the classifier correctly classifies a sample that is of one of the ten handwritten digits.

\textbf{USPS} \cite{uspsdataset} contains 10 classes of handwritten digits obtained via scanning the envelopes from the USPS. There are 9298 images in total of the 10 classes, and each of them is of size 16 $\times$ 16 pixels. The samples are in grayscale. We have converted it to RGB. Accuracy on its testing set is calculated in terms of if the classifier correctly classifies a sample that is one of the ten handwritten digits.

\textbf{SVHN} \cite{netzer2011reading} stands for Street View House Numbers. It contains 10 classes of digits obtained by street view cameras. There are 600,000 samples of printed images of size 32 $\times$ 32 pixels. Accuracy on its testing set is calculated in terms of whether the classifier correctly classifies a sample that is of one of the ten street view digits.

\textbf{Office-31} \cite{saenko2010adapting} has three domains: Amazon (A), Dslr (D), and Webcam (W). Each domain contains 31 classes of office objects such as  projectors and rulers. In total, there are 4,110 images. Six domain adaptation tasks can be formed from the Office-31 dataset and we evaluate the MiddleGAN against state-of-the-art solutions on all of the domain adaptation tasks. Accuracy on its testing set is calculated in terms of if the classifier correctly classifies a sample that is one of the 31 object classes.

\textbf{Office-Home} \cite{venkateswara2017deep} has four domains: Art (Ar), Clipart (Cl), Real World (Rw), and Product (Pr). Each domain contains 65 classes of objects that can be found in an office or a home, such as flowers and bikes. In total, there are  15,500  images. Twelve domain adaptation tasks can be formed from the Office-Home dataset and we evaluate the MiddleGAN against state-of-the-art solutions on all of the domain adaptation tasks. Accuracy on its testing set is calculated in terms of if the classifier correctly classifies a sample that is one of the 65 object classes.

\subsubsection{Implementation details}
On all 12 domain adaptation tasks, our source discriminator $D_s$, target discriminator $D_t$, and generator $G$ share the same structures. For $D_s$ and $D_t$, we have 5 2D convolutional layers followed by Leaky ReLu layers with a negative slope of 0.2. After each of the 2nd, 3rd, and 4th 2D convolutional layers, a 2D batch norm layer is added. The activation function is Sigmoid. The learning rate of the Adam optimizers for both $D_s$ and $D_t$ are 0.0002. For the generator $G$, its structure contains 5 transposed 2D convolutional layers. After each of the 1st, 2nd, 3rd, and 4th layers, a 2D batch norm layer is added. The activation function is tanh. The learning rate of the Adam optimizer for $G$ is 0.0002. The structures of the discriminators and generator are based on the DCGAN \cite{radford2015unsupervised} (Note that there is only one discriminator in DCGAN and we use the DCGAN's discriminator's structure for both our discriminators). There is no weight decay for any of the three neural nets. Regarding the final classifier trained on the combination of the source training set, the target training set, and the fake images, its architecture is Inception v3 \cite{szegedy2016rethinking, szegedy2017inception}. We train on a NVIDIA A100 GPU. For each domain adaptation task, the number of generated images is empirically determined that result in the final classifier to give the best performance measured in accuracy scores. 

\subsection{CIFAR-10 $\leftrightarrow$ STL-10}
Table \ref{table:eval_cifar_stl} demonstrates the comparison of the MiddleGAN against 5 state-of-the-art baselines in terms of accuracy: VADA \cite{shu2018dirt}, IIMT \cite{yan2020improve}, Enforced Transfer \cite{gao2022enforced}, SE \cite{french2017self} and SEMA \cite{zuo2021margin}. The Source Only algorithm indicates the performance of training a classifier on the source domain and directly applies it to the target domain without mitigating the domain shift. On the task of CIFAR-10 $\rightarrow$ STL-10, it outperforms the second best-performing algorithm, the Enforced Transfer, by 3.4\%; on the task of STL-10 $\rightarrow$ CIFAR-10, it outperforms the second best-performing algorithm, SEMA, by 12.1\%. The superiority of the MiddleGAN suggests that the fake samples are invariant to domain shift.

\begin{table*}[h]
    \centering
    \begin{tabular}{ ccc } 
    \hline
    Algorithm   & CIFAR-10 $\rightarrow$ STL-10     & STL-10 $\rightarrow$ CIFAR-10\\
    \hline

    Source Only \cite{yan2020improve}       & 75.9\%    & 61.8\%  \\ 
    VADA \cite{shu2018dirt}                 & 80.0\%    & 73.5\% \\
    IIMT \cite{yan2020improve}              & 83.1\%    & 81.6\% \\
    Enforced Transfer \cite{gao2022enforced}     & 86.1\%    & - \\
    SE \cite{french2017self}                & 76.3\%    &83.9\% \\
    SEMA \cite{zuo2021margin}               & 78.7\%    & 86.6\% \\
    \hline
    MiddleGAN                             & \textbf{89.5\%} & \textbf{98.7\%}   \\

    \hline
    \\
    \end{tabular}
    \caption{The results on the domain adaptation task of CIFAR-10 $\rightarrow$ STL-10 and  STL-10 $\rightarrow$ CIFAR-10 of 5 state-of-the-art domain adaptation algorithms and the MiddleGAN. On both tasks, we outperform the second best-performing algorithms by a large margin (3.4\% on CIFAR-10 $\rightarrow$ STL-10 and 12.1\% on STL-10 $\rightarrow$ CIFAR-10), which demonstrates the superiority of the MiddleGAN. The metric is accuracy.}

    \label{table:eval_cifar_stl}
\end{table*}

\subsection{Office-31}
In Table \ref{table:eval_office_31}, we compare the MiddleGAN against ResNet-50 and 10 other state-of-the-art domain adaptation algorithms. Again, the metric is accuracy. Out of the six domain adaptation tasks, we achieve the state-of-the-art performance on three of them. However, note that our improvement over the state-of-the-art algorithms is very significant: On the task D $\rightarrow$ A, we improve over the second best-performing algorithm Fixbi by 6.2\%. On tasks that we do not outperform the state-of-the-art, the difference between the MiddleGAN's performance and the state-of-the-art's performance is usually small. For example, on the task of A $\rightarrow$ D, the state-of-the-art performance is 95.8\% and we are only 1.7\% off. Overall, the MiddleGAN achieves state-of-the-art performance on average, outperforming all the other baselines in comparison.

\subsection{Office-Home}
In Table \ref{table:eval_office_home}, we compare the MiddleGAN against Resnet-50 and 9 state-of-the-art domain adaptation algorithms on the Office-Home dataset. Since there are four sub-domains in the Office-Home dataset, there are in total 12 domain adaptation tasks to be done. Out of the 12 domain adaptation algorithms, the MiddleGAN achieves state-of-the-art performance on 5 of them. When the MiddleGAN achieves state-of-the-art performance on a domain adaptation task, it usually outperforms the second best-performing algorithm by a large margin. For example, on the task of Pr $\rightarrow$ Cl, we outperform the second best-performing algorithm Fixbi by 20.1\%. Overall, the MiddleGAN achieves state-of-the-art performance on average, which is an accuracy score of 75.3\%, 2.6\% higher than the second best-performing algorithm, Fixbi.

\subsection{MNIST $\leftrightarrow$ SVHN}
In Table \ref{table:eval_mnist_svhn} we compare the MiddleGAN against six state-of-the-art algorithms and we observe that the MiddleGAN achieves the new state-of-the-art performance on both SVHN $\rightarrow$ MNIST and MNIST $\rightarrow$ SVHN: on the first task, it achieves an accuracy score of 99.5\% and on the second task an accuracy score of 99.9\%. Both scores are nearly 100\%. Compared to the second best-performing algorithm, the Mean Teacher, the MiddleGAN only achieves an improvement of 0.3\% on the first task. This is because there is not enough room for improvement, considering that the Mean Teacher already achieves an accuracy score of 99.2\% on the first task. On the second task, the second best-performing algorithm the Mean Teacher achieves an accuracy score of 97.0\%, and there is more room for improvement since it is not nearly 100\%. As a result, on the second task, we outperform the Mean Teacher by 2.9\%, a more significant improvement compared to our improvement on the first task.



\section{Conclusion}
Our assumption is that if the generated samples are similar to both the source samples and the target samples, a classifier trained on these domain agnostic samples (and the training samples in the source and target domains) will learn to use domain invariant features to do classification (on the target domain). Based on this idea we propose MiddleGAN, a variation of GAN that generate these domain agnostic samples. We have extended the theory of GAN to prove that there exist optimal solutions for the weights of the two discriminators and one generator in MiddleGAN. We have empirically shown that the generated samples are similar to both the source and target domain samples (domain agnostic). We have conducted extensive evaluations using 24 benchmarks; on the 24 benchmarks, we compare MiddleGAN against various state-of-the-art algorithms and outperform the state-of-the-art by up to 20.1\% on certain benchmarks.

\bibliography{aaai22}

\end{document}